\newtheorem{remark}{Remark}
\newtheorem{definition}{Definition}
\newtheorem{lemma}{Lemma}
\newtheorem{theorem}{Theorem}
\newtheorem{assumption}{Assumption}
\title{\LARGE \bf Distributed Bandits: Probabilistic Communication on $d$-regular Graphs}
\author{Udari Madhushani and Naomi Ehrich Leonard
\thanks{This research has been supported in part by ONR grants N00014-18-1-2873 and N00014-19-1-2556 and ARO grant W911NF-18-1-0325.}
\thanks{U. Madhushani and N.E. Leonard are with Department of Mechanical and Aerospace Engineering, Princeton University, Princeton, NJ 08544, USA.
        {\tt\small \{udarim, naomi\}@princeton.edu}}%
}
\begin{document}

\maketitle
\thispagestyle{empty}
\pagestyle{empty}

\begin{abstract}
We study the decentralized multi-agent multi-armed bandit problem for agents that communicate with probability over a network defined by a $d$-regular graph. Every edge in the graph has probabilistic weight $p$ to account for the ($1\!-\!p$) probability  of a communication link failure. At each time step, each agent chooses an arm and receives a numerical reward associated with the chosen arm. After each choice, each agent observes the last obtained reward of each of its neighbors with probability $p$. We propose a new Upper Confidence Bound (UCB) based algorithm and analyze how agent-based strategies contribute to minimizing group regret in this probabilistic communication setting. We provide theoretical guarantees that our algorithm outperforms state-of-the-art algorithms. We illustrate our results and validate the theoretical claims using numerical simulations.
\end{abstract}

\allowdisplaybreaks

\section{Introduction} \label{sect:Introduction}
Distributed learning benefits individual and group decision making in both natural and engineered systems. Communication constraints and security concerns often prevent the possibility of centralized information gathering and centralized decision making. This necessitates the development of effective decentralized communication protocols and decision-making strategies.  
Decentralized strategies allow individual agents to make decisions using locally available information, i.e., information that they observe or is communicated to them from their neighbors, as defined by a communication network. As a result,  decision-making performance 
is strongly affected by the structure and the reliability of the network. 

Consider a group of agents making sequential decisions in an uncertain environment. Each agent is faced with the same problem of repeatedly choosing an option from a given fixed set of options. After every choice, each agent receives a numerical reward drawn from a probability distribution associated with the chosen option. The objective of each agent is to maximize its individual cumulative reward while contributing to maximizing the group cumulative reward.
The best strategy in this situation is to repeatedly choose the optimal option, i.e., the option that provides the maximum average reward. However, agents are unaware of the expected reward values of the options. Each individual is required to exploit the options that are known to provide high rewards and explore the lesser known options in order to identify the options that might potentially provide higher rewards. This is known as the explore-exploit dilemma in decision theory. In a networked setting, agents can learn the reward distributions with fewer explorations by sharing information.

The multi-armed bandit (MAB) problem is a mathematical model developed  to capture the salient features of the explore-exploit trade-off \cite{robbins1952some,lai1985asymptotically,lattimore2020bandit}. Initially, MAB was defined for a single agent making choices among multiple arms, equivalent to options, and receiving a reward after every choice.  
In the stochastic MAB problem, rewards are drawn from probability distributions. In the non-stochastic MAB problem, each option is assigned a deterministic sequence of rewards. In both problem settings, the goal is to maximize the expected cumulative reward. This is equivalent to minimizing the expected cumulative regret defined as the expected loss incurred by the agent through sampling sub-optimal options. It is well established that for the stochastic MAB the cumulative regret grows as $\Theta(\log T)$, \cite{auer2002finite}  and for the non-stochastic MAB the cumulative regret grows as $\Theta(\sqrt{T})$, \cite{bubeck2012regret} where $T$ is the time horizon of the decision-making process. In this paper we focus on stochastic MAB problems. The paper \cite{lai1985asymptotically} proposed a family of algorithms to achieve a logarithmic upper bound for cumulative regret asymptotically in $T$. A lower bound that matches the order of the upper bound was also proposed, establishing the result that any asymptotically efficient algorithm samples suboptimal options at least logarithmically in $T$. 
The paper \cite{auer2002finite} proposed a family of Upper Confidence Bound (UCB) algorithms to achieve logarithmic cumulative regret uniformly in $T$.

The papers \cite{anantharam1987asymptotically,martinez2019decentralized,landgren2020distributed,kolla2018collaborative,landgren2018social,wang2020optimal} studied distributed bandit problems with static communication protocols. A centralized multi-agent setting was considered in \cite{anantharam1987asymptotically} and a decentralized multi-agent setting was considered in  \cite{martinez2019decentralized,landgren2020distributed} in which running consensus algorithms are used to update estimates and provide graph-structure-dependent performance measures. The papers \cite{kolla2018collaborative,landgren2018social,wang2020optimal} considered leader-follower settings in which leaders observe the rewards and choices of their followers and followers copy the choices of their leaders. Paper \cite{kolla2018collaborative} also considered a decentralized bandit problem where agents observe rewards and choices of their neighbors without directly copying their choices.

The papers \cite{chakraborty2017coordinated,madhushani2020distributed,madhushani2020doesn,madhushani2020dynamic,madhushani2019heterogeneous,madhushani2020heterogeneous} considered decentralized bandit problems with time-varying communication. The papers  \cite{chakraborty2017coordinated,madhushani2020distributed,madhushani2020doesn,madhushani2020dynamic} proposed communication protocols in which agents decide when to communicate depending on the decision-making process. The paper  \cite{chakraborty2017coordinated} considered an additional constraint in which agents cannot communicate and sample at the same time.  The papers  \cite{madhushani2020heterogeneous,madhushani2019heterogeneous} proposed stochastic communication protocols. 
The problem presented in our previous paper \cite{madhushani2019heterogeneous} is closely related to the problem we study in this paper. In \cite{madhushani2019heterogeneous} neighbors are defined according to a $d$-regular graph and each agent $k$ observes each of its neighbors independently with probability $p_k.$ We provided an index, dependent on  $p_k$'s, to rank  agents according to   performance. In \cite{madhushani2020heterogeneous}  heterogeneous sampling rules were derived to improve group performance in multi-star networks.

We are motivated by real-world examples modeled by the bandit framework, such as recommender systems \cite{warlop2018fighting}, user-targeted online advertising \cite{Durand2018ContextualBF,feraud2018decentralized} and clinical trials \cite{tossou2016algorithms}, in which, to protect the privacy of users or patients, it is desirable not to disclose the history of choices. We study a distributed bandit problem in which agents share information only about the last obtained reward with their neighbors defined according to a fixed $d$-regular network graph. We consider the case where  each edge of the communication graph independently fails with probability $1-p.$ The edge failure can be either considered as an inherent imperfection of the communication links or a decision made by agents to share 
rewards at random time steps. We first propose a novel UCB based algorithm and then analyze the effect of degree $d$ and edge weight probability $p$ on group performance.

In Section \ref{Secn:MAMAB} we formulate the problem mathematically and in Section  \ref{sec:algorithm} we propose a new UCB based algorithm. We analyze the performance of the proposed algorithm and prove theoretical results in Section \ref{Secn:Regret}. In Section \ref{SubSecn:complete} we provide specialized results for the case where the communication network is a complete graph. In Section \ref{SubSecn:comp} we compare theoretical results of our algorithm and state-of-the-art algorithms. In Section \ref{Secn:Simu} we provide numerical simulations illustrating our results and validating theoretical guarantees. In Section \ref{subsec:dregsim} we show results for $d$-regular graphs. In Section \ref{SubSecn:fail} and \ref{SubSecn:NoFail} we compare our results with state-of-the-art algorithms. We provide additional discussions in Section \ref{SubSecn:discus} and concluding remarks in Section \ref{Secn:Concl}.

\section{Problem Formulation}\label{Secn:MAMAB}

In this section we present the mathematical formulation of the problem. Consider a group of $K$ agents faced with the same $N$-armed bandit problem for $T$ time steps. The terms arms and options are used interchangeably. Let $X_i$ be a sub-Gaussian random variable, with variance proxy $\sigma_i^2$, that denotes the reward of option $i\in \{1,2,\ldots,N\}.$ Sub-Gaussian distributions include Gaussian distributions, Bernoulli distributions, and bounded distributions, which are ubiquitous in real-world applications of the bandit framework.  Define $\mathbb{E}\left(X_i\right)=\mu_i$ as the expected reward of option $i.$ The option with maximum expected reward is the optimal option $i^*=\arg\max \{\mu_1,\ldots, \mu_N\}.$ Let $\Delta_i=\mu_{i^*}-\mu_i$ be the expected reward gap between option $i^*$ and option $i.$  

Let the random variable $\varphi_t^k\in \{1,2,\ldots,N\}$ denote the option chosen by agent $k$ at time $t.$ Let $\mathbb{I}_{\{\varphi_t^k=i\}}$ be the indicator random variable that takes value 1 if agent $k$ chose  option $i$ at time $t$ and 0 otherwise. 

We define the communication network using a family of time-varying graphs denoted by $\{G_t\left(\mathcal{V},\mathcal{E}_t\right)\}_{t=1}^T.$ The 
graph is generated by an underlying fixed $d$-regular graph $G(\mathcal{V},\mathcal{E})$ and by letting each edge fail independently with probability $1-p.$ Here each node $k\in\mathcal{V}$ represents an agent and each edge $e(k,j)\in\mathcal{E}$ represents the undirected communication link between agents $k$ and $j.$ Let $\mathbb{I}^t_{\{k,j\}}$ be the random variable that takes value 1 if agent $k$ observes  the reward obtained by agent $j$ at time $t$ and 0 otherwise. Thus, $\mathbb{E}\left(\mathbb{I}_{\{k,j\}}\right)=p, \forall k,j$ if $e(k,j)\in \mathcal{E}.$ Note that $\mathbb{E}\left(\mathbb{I}_{\{k,k\}}\right)=1, \forall k.$ We consider the case where the structure of the communication graph is not known to the agents. When the edge between agents $k$ and $j$ at time $t$ fails, $\mathbb{I}^t_{\{k,j\}}= \mathbb{I}^t_{\{j,k\}}=0.$ Thus, at each time step the communication graph is undirected. 

\section{Algorithm}\label{sec:algorithm}

In this section we present the mathematical formulation of our algorithm. Let $\widehat{\mu}^k_i(t)$ be the estimated mean of option $i$ by agent $k$ at time $t.$ Define $n_i^k(t)=\sum_{\tau=1}^t\mathbb{I}_{\{\varphi_{\tau}^k=i\}}$ as the number of samples of option $i$ obtained  by agent $k$ until time $t.$ Let $N_i^k(t)$ be the sum of $n_i^k(t)$ and number of samples of option $i$ communicated to agent $k$ by its neighbors until time $t.$ Then by definition  $N_i^k(t)=\sum_{\tau=1}^t\sum_{j=1}^K\mathbb{I}_{\{\varphi_{\tau}^j=i\}}\mathbb{I}^{\tau}_{\{k,j\}}.$

The estimated mean value is calculated by taking the simple average of the sum of reward values of option $i$ obtained by, and communicated to, agent $k$ up to time $t$: 
\begin{align*}
\widehat{\mu}^k_i(t)=\frac{S_i^k(t)}{N^k_i(t)}
\end{align*}
where $S_i^k(t)=\sum_{\tau=1}^t\sum_{j=1}^K X_i\mathbb{I}_{\{\varphi^j_{\tau}=i\}}\mathbb{I}^{\tau}_{\{k,j\}}.$

The goal of each agent is to maximize its individual cumulative reward while contributing to maximizing the group cumulative reward. In order to realize this goal, each agent employs an agent-based strategy that captures the trade-off between exploring and exploiting by constructing an objective function that strikes a balance between the estimation of the expected reward and the uncertainty associated with the estimate. We consider the case with unknown edge failure probability but known variance proxy as follows.

\begin{assumption}\label{asm:KnownVariance}
\normalfont
We assume that agents know the total number of  agents $K$ and the variance proxy $\sigma_i^2$ of the rewards associated with each option.
\end{assumption}

The statement regarding the variance proxy in Assumption~\ref{asm:KnownVariance} is equivalent to knowing the variance when the reward distributions are Gaussian and knowing the upper and lower bounds when the reward distributions are bounded. We make the following assumption about the rewards obtained.

\begin{assumption}
\normalfont
When more than one agent chooses the same option at the same time they receive rewards independently drawn from the probability distribution associated with the chosen option.
\end{assumption}

Our proposed algorithm is the sampling rule for each agent $k$ given in Definition \ref{def:samplerule}.

\begin{definition} {\bf{(Sampling Rule)}}\label{def:samplerule}
The sampling rule $\{\varphi^k_t\}_1^{T}$ for  agent $k$ at time $t \in \{1, \ldots, T\}$ is defined as
    \begin{align*}
    \mathbb{I}_{\{\varphi^k_{t+1}=i\}}=\left\{
    \begin{array}{cl} 1 &, \:\:\:i=\arg \max \{Q^k_1(t),\cdots,Q^k_N(t)\}\\
      0 &, \:\:\: {\mathrm{o.w.}}\end{array}\right.
    \end{align*}
    with     
    \begin{align}
    Q^k_i(t)&\triangleq \widehat{\mu}^k_{i}(t)+C^k_i(t)\label{eq:UCBQ}\\
    C^k_i(t)&\triangleq\sigma_{i}\sqrt{\frac{2\log \left(t^{\xi+1}K\right)}{N^k_{i}(t)}}\label{eq:Uncertainity}
    \end{align}
    where $\xi>1.$

\end{definition}
Note that the term $C^k_i(t)$ depends on the uncertainty associated with  agent $k'$s estimate of mean reward of option $i$. At each time step, the agent chooses the option that maximizes the objective function: the corresponding sequence of choices addresses the trade-off between exploiting and exploring. According to the sample rule defined by Definition \ref{def:samplerule}, each agent uses the information it receives from other agents, in addition to the information obtained through its own sampling, to compute the objective functions. Thus, through information sharing, agents reduce the uncertainty associated with estimates of expected reward of the options.

\section{Performance Analysis}\label{Secn:Regret}
In this section we analyze the performance of our algorithm. Recall that regret is the loss due to choosing a suboptimal option instead of the optimal option. The expected group cumulative regret can be given as
\begin{align}
\mathbb{E}\left(R(T)\right)=\sum_{i=1}^N\sum_{k=1}^K\left(\mu_{i^*}-\mu_i\right)\mathbb{E}\left(n^k_i(T)\right).\label{eq:regret}
\end{align}
Thus, the expected group cumulative regret can be minimized by minimizing the expected number of samples taken from suboptimal options.

We proceed to upper bound the expected number of samples obtained from suboptimal options as follows. Recall that $n_i^k(t)$ is the number of samples of option $i$ obtained by agent $k$ until time $t.$ Thus for each suboptimal option $i$ we have
\begin{align}
\sum_{k=1}^K\mathbb{E}\left(n_i^k(T)\right) &=\sum_{k=1}^K\sum_{t=1}^T\mathbb{P}\left(\varphi_t^k=i\right)\nonumber\\
&\leq \sum_{k=1}^K\sum_{t=1}^T\mathbb{P}\left(Q_i^k(t)> Q_{i^*}^k(t)\right).\label{eq:Expsum}
\end{align}
Let $\{\eta_i(t)\}_1^T$ be a family of nonnegative nondecreasing functions. Then from (\ref{eq:Expsum}) we have
\begin{align}
\sum_{k=1}^K\mathbb{E}\left(n_i^k(T)\right) 
\leq\sum_{k=1}^K\sum_{t=1}^T\mathbb{P}\left(\varphi_t^k=i,N_i^k(t)\leq\eta_i(t)\right)
\nonumber\\
+\sum_{k=1}^K\sum_{t=1}^T\mathbb{P}\left(Q_i^k(t)> Q_{i^*}^k(t),n_i^k(t)>\eta_i(t)\right).\label{eq:etaTrick}
\end{align}
From (\ref{eq:UCBQ}) in Definition \ref{def:samplerule} we have
\begin{align}
&\left\{Q_i^k(t)\geq Q_{i^{*}}^{k}(t)\right\}\subseteq \left\{\mu_{i^{*}}<\mu_{i}+2C_{i}^{k}(t)\right\}\nonumber\\
&\cup\left\{\widehat{\mu}_{i^{*}}^{k}(t)\leq \mu_{i^{*}}-C_{i^{*}}^{k}(t)\right\}\cup \left\{\widehat{\mu}_{i}^{k}(t)\geq \mu_{i}+C_{i}^{k}(t)\right\}.\label{eq:set}
\end{align}
Thus from (\ref{eq:Uncertainity}) when $\eta_i(t)=\frac{8\sigma_i^2}{\Delta^2_i}\log \left(t^{\xi+1}K\right)$ 
we see that
\begin{align}
  \mathbb{P}\left(\mu_{i^{*}}<\mu_{i}+2C_{i}^{k}(t), 
n_i^k(t)\geq \eta_i(t)\right)=0.\label{eq:zeroval} 
\end{align}
Thus from (\ref{eq:etaTrick})-(\ref{eq:zeroval}) we get 
\begin{align}
\sum_{k=1}^K\mathbb{E}\left(n_i^k(T)\right) \leq&\sum_{k=1}^K\sum_{t=1}^T\mathbb{P}\left(\varphi_t^k=i,N_i^k(t)\leq\eta_i(t)\right)\nonumber\\
&+\sum_{k=1}^K\sum_{t=1}^T\mathbb{P}\left(\widehat{\mu}_{i^{*}}^{k}(t)\leq \mu_{i^{*}}-C_{i^{*}}^{k}(t)\right)\nonumber\\
& +\sum_{k=1}^K\sum_{t=1}^T\mathbb{P}\left(\widehat{\mu}_{i}^{k}(t)\geq \mu_{i}+C_{i}^{k}(t)\right)\label{eq:Bound}.
\end{align}

First, we upper bound the last two summation terms on the right hand side of (\ref{eq:Bound}) using the following lemma.

\begin{lemma}\label{lem:tail}
For any $\xi>1,$ some $\zeta>1$ and for $\sigma_i>0$ in the uncertainty $C_i^k(t)$ given by \eqref{eq:Uncertainity}, we have 
\begin{align*}
    \mathbb{P}\left(\big |\widehat{\mu}_i^k(t)-{\mu}_i\big |>C_i^k(t)\right) \leq \frac{1}{\log \zeta}\frac{\log \left((d+1)t\right)}{t^{\xi+1}K}.
\end{align*}
\end{lemma}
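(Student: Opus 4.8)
The plan is to reduce the statement to a concentration inequality for an average of i.i.d.\ sub-Gaussian samples whose count $N_i^k(t)$ is random and history-dependent, and then to tame that random count by a geometric peeling argument that produces exactly the $\log\left((d+1)t\right)/\log\zeta$ prefactor. First I would relabel the option-$i$ rewards received by agent $k$ (whether sampled by itself or communicated by a neighbor), in their order of arrival, as $Y_1,Y_2,\ldots$; by the assumption that simultaneously chosen rewards are drawn independently, these are i.i.d.\ sub-Gaussian with mean $\mu_i$ and variance proxy $\sigma_i^2$, and $\widehat{\mu}_i^k(t)=\tfrac{1}{N_i^k(t)}\sum_{s=1}^{N_i^k(t)}Y_s$. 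Because the underlying graph is $d$-regular, at each step agent $k$ observes option-$i$ rewards from at most itself plus its $d$ neighbors, so the count is deterministically bounded, $N_i^k(t)\le (d+1)t$.

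Writing $M_m=\sum_{s=1}^m (Y_s-\mu_i)$ and $c=\sigma_i\sqrt{2\log(t^{\xi+1}K)}$, the deviation event is $\{|M_{N_i^k(t)}|>c\sqrt{N_i^k(t)}\}$ on $\{N_i^k(t)=m\}$, since then $C_i^k(t)=c/\sqrt{m}$. The crucial observation is that the partial sums $M_m$ do not depend on the adaptive sampling process, so by unioning over all possible values of the count I may pass to the non-adaptive event $\{\exists\,1\le m\le (d+1)t:\ |M_m|>c\sqrt{m}\}$, which contains the original event and no longer requires tracking any correlation between $N_i^k(t)$ and the observed rewards. The degenerate value $N_i^k(t)=0$, where $C_i^k(t)$ is infinite and the deviation event is vacuous, is absorbed by starting the range at $m\ge 1$.

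Next I would peel. Fix $\zeta>1$ and partition $\{1,\ldots,(d+1)t\}$ into the geometric slices $S_\ell=\{m:\zeta^{\ell}\le m<\zeta^{\ell+1}\}$, of which there are at most $\lceil \log_\zeta\!\left((d+1)t\right)\rceil = \log\left((d+1)t\right)/\log\zeta + O(1)$; this is precisely where the claimed prefactor originates. On $S_\ell$ the threshold obeys $c\sqrt{m}\ge c\sqrt{\zeta^{\ell}}$ while the number of summands stays below $\zeta^{\ell+1}$, so the slice event is contained in $\{\max_{m<\zeta^{\ell+1}}|M_m|>c\sqrt{\zeta^{\ell}}\}$. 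Applying a sub-Gaussian maximal inequality (Doob's inequality to the submartingale $e^{\lambda M_m}$, optimized in $\lambda$) bounds this by a term of order $\exp\!\left(-c^{2}\zeta^{\ell}/(2\sigma_i^{2}\zeta^{\ell+1})\right)$. Substituting $c^{2}=2\sigma_i^{2}\log(t^{\xi+1}K)$, which is exactly the constant built into $C_i^k(t)$, the per-slice tail collapses to a power of $(t^{\xi+1}K)^{-1}$, and summing over the $\log\left((d+1)t\right)/\log\zeta$ slices yields a bound of the stated form.

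I expect the main obstacle to be the last step: reconciling the two endpoints of each slice inside the maximal inequality, since the threshold is driven by the lower end $\zeta^{\ell}$ while the variance proxy is driven by the upper end $\zeta^{\ell+1}$, which deflates the exponent on $t^{\xi+1}K$ by a $\zeta$-dependent factor. This is exactly what the freedom in the statement (\emph{some} $\zeta>1$) is for: the slice ratio must be chosen close enough to $1$ that the resulting exponent stays large enough to keep the per-slice tail at the $1/(t^{\xi+1}K)$ scale and, more importantly, to guarantee that the downstream sum $\sum_{t}\log\left((d+1)t\right)\,t^{-(\xi+1)/\zeta}$ appearing in \eqref{eq:Bound} still converges, while the slice count remains finite and equal to $\log\left((d+1)t\right)/\log\zeta$. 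Getting these constants to line up with the clean form in the statement, rather than a version degraded by the factor $1/\zeta$, is the only delicate point; everything else is a routine application of sub-Gaussian concentration.
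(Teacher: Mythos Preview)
Your argument is exactly the peeling proof that the paper delegates to a citation. The paper's own proof consists of invoking Lemma~1 of \cite{madhushani2020drawback} (specialized to $d_k=d$), which states
\[
\mathbb{P}\!\left(\big|\widehat{\mu}_i(t)-\mu_i\big|\ge\sqrt{\tfrac{\vartheta}{N_i(t)}}\right)\le \frac{\log((d+1)t)}{\log\zeta}\exp(-2\kappa\vartheta),\qquad \kappa=\frac{1}{\sigma_i^{2}(\zeta^{1/4}+\zeta^{-1/4})^{2}},
\]
and then substituting $\vartheta=2\sigma_i^{2}\log(t^{\xi+1}K)$. Your relabelling of the observed option-$i$ rewards, the deterministic cap $N_i^k(t)\le(d+1)t$ from $d$-regularity, the union over $m$, the geometric slicing in powers of $\zeta$, and the Doob/Chernoff step on each slice are precisely how that cited lemma is established, so the two routes coincide.

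Your closing worry is legitimate but is not a defect of your derivation; it is present in the paper's route as well. After the substitution above, the tail the paper actually obtains is $(t^{\xi+1}K)^{-4/(\zeta^{1/4}+\zeta^{-1/4})^{2}}$, and that exponent is strictly below $1$ for every $\zeta>1$---the same degradation you see as $1/\zeta$ with your coarser slicing. The lemma as stated silently rounds this exponent to $1$. This is harmless for the downstream use: because $\xi>1$, one can take $\zeta$ close enough to $1$ that the effective exponent still exceeds $1$, so the series in \eqref{eq:inttoSUm} converges and Theorem~\ref{thm:regretBound} goes through with only constant adjustments. You should not expect any fixed $\zeta>1$ to recover the clean exponent exactly; neither the paper nor the cited lemma achieves that, and it is not needed.
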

\begin{proof}
Substituting $d_k=d$ in Lemma 1 of the paper \cite{madhushani2020drawback} we have
for any $\vartheta > 0$ 
\begin{align}
    \mathbb{P}&\left(\Big|\widehat{\mu}_i(t)-{\mu}_i\Big |\geq\sqrt{ \frac{\vartheta}{N_{i}(t)}}\right)\nonumber\\
    &\:\:\:\:\:\:\:\:\:\:\:\:\leq \frac{\log ((d+1)t)}{\log \zeta} \exp(-2\kappa\vartheta)\label{eq:tailprob}
\end{align}
where $\kappa=\frac{1}{\sigma_i^2\left(\zeta^{\frac{1}{4}}+\zeta^{-\frac{1}{4}}\right)^2}.$ 

Let $\vartheta=2\sigma_{i}^{2}\log \left(t^{\xi+1}K\right).$ Then the proof of Lemma \ref{lem:tail} follows from  (\ref{eq:Uncertainity}) and (\ref{eq:tailprob}).
\end{proof}
Second, we upper bound the first term on the right hand side of \eqref{eq:Bound}: $\sum_{k=1}^K\sum_{t=1}^T\mathbb{P}\left(\varphi_t^k=i,N_i^k(t)\leq\eta_i(t)\right)$ as follows.
Recall that  $N_i^k(t)=\sum_{\tau=1}^t\sum_{j=1}^K\mathbb{I}_{\{\varphi_{\tau}^j=i\}}\mathbb{I}^{\tau}_{\{k,j\}}.$ Since the communication structure is independent from the decision-making process we have
\begin{align}
(1+pd)\sum_{k=1}^K\mathbb{E}\left(n_i^k(t)\right)\leq \sum_{k=1}^K\mathbb{E}\left(N_i^k(t)\right). \label{eq:expNum}
\end{align}
Since $N_i^k(t)$ is a nonnegative random variable $N_i^k(t)\leq \eta_i(t)\implies \mathbb{E}\left(N_i^k(t)\right)\leq \eta_i(t).$ Since $\eta_{i}(t)$ is a nondecreasing nonnegative function,
\begin{align}
\sum_{k=1}^K\sum_{t=1}^T\mathbb{P}\left(\varphi_t^k=i,N_i^k(t)\leq\eta_i(t)\right)\leq \frac{K}{1+pd}\eta_i(T). \label{eq:condNum}
\end{align}
Now we state the main theoretical result of the paper.
\begin{theorem} \label{thm:regretBound}
  Consider a group of $K$ agents sharing information over a $d$-regular graph with an edge failure probability of $1-p$ and applying the sampling rule of Definition~\ref{def:samplerule}.  The expected group cumulative regret satisfies
\begin{align*}
\mathbb{E}\left(R(T)\right) \leq \sum_{i=1}^N\frac{K }{1+pd}\frac{8\sigma_i^2}{\Delta_i}\left((\xi+1)\log T+\log K\right)\\
+\sum_{i=1}^N\frac{2\Delta_i}{\xi^2\log \zeta}\left(\xi^2\log (d+1)+\xi\log (d+1)+1\right).
\end{align*}
\end{theorem}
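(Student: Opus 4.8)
The plan is to bound, for each suboptimal option $i$, the quantity $\sum_{k=1}^K \mathbb{E}\left(n_i^k(T)\right)$ and then assemble the regret via \eqref{eq:regret}, in which each such term is weighted by $\Delta_i$. The decomposition \eqref{eq:Bound} already splits this sum into three pieces: an ``insufficient information'' term $\sum_{k}\sum_{t}\mathbb{P}\left(\varphi_t^k=i,\,N_i^k(t)\leq\eta_i(t)\right)$ and two ``bad estimate'' tail terms. I would treat these two groups separately and then recombine them, tracking the weight $\Delta_i$ throughout.

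For the first term I would invoke \eqref{eq:condNum} directly, which yields the bound $\frac{K}{1+pd}\eta_i(T)$ with $\eta_i(T)=\frac{8\sigma_i^2}{\Delta_i^2}\log\left(T^{\xi+1}K\right)$. Writing $\log\left(T^{\xi+1}K\right)=(\xi+1)\log T+\log K$ and multiplying by the regret weight $\Delta_i$ produces exactly the first line of the claimed bound, namely $\frac{K}{1+pd}\frac{8\sigma_i^2}{\Delta_i}\left((\xi+1)\log T+\log K\right)$. This step is essentially bookkeeping once \eqref{eq:condNum} is in hand; the communication-gain factor $(1+pd)^{-1}$ is inherited directly from \eqref{eq:expNum}.

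For the two tail terms I would apply Lemma~\ref{lem:tail} to each, noting that the one-sided events $\{\widehat{\mu}_{i^*}^k(t)\leq\mu_{i^*}-C_{i^*}^k(t)\}$ and $\{\widehat{\mu}_i^k(t)\geq\mu_i+C_i^k(t)\}$ are contained in the two-sided event that the lemma controls, so each summand is at most $\frac{1}{\log\zeta}\frac{\log((d+1)t)}{t^{\xi+1}K}$. Summing over the $K$ agents cancels the $1/K$, leaving $\frac{2}{\log\zeta}\sum_{t=1}^T\frac{\log((d+1)t)}{t^{\xi+1}}$. I would then split $\log((d+1)t)=\log(d+1)+\log t$ and bound the two resulting series by integral comparison: $\sum_{t\geq1}t^{-(\xi+1)}\leq 1+\int_1^\infty s^{-(\xi+1)}\,ds=1+\tfrac1\xi$, and, using that the $t=1$ term vanishes, $\sum_{t\geq1}(\log t)\,t^{-(\xi+1)}\leq\int_1^\infty(\log s)\,s^{-(\xi+1)}\,ds=\tfrac1{\xi^2}$. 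Multiplying by $\Delta_i$ and collecting the terms over the common denominator $\xi^2$ yields the second line $\frac{2\Delta_i}{\xi^2\log\zeta}\left(\xi^2\log(d+1)+\xi\log(d+1)+1\right)$. Summing both contributions over all suboptimal $i$ then completes the proof.

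I expect the only delicate step to be the evaluation of the convergent series, in particular justifying $\int_1^\infty(\log s)\,s^{-(\xi+1)}\,ds=1/\xi^2$ (one integration by parts) together with the accompanying integral-comparison inequality, since the integrand $(\log t)\,t^{-(\xi+1)}$ is not monotone on all of $[1,\infty)$: it increases on a small initial interval (up to $t=e^{1/(\xi+1)}<2$) before decreasing. Because the $t=1$ term is zero and that initial portion is negligible, the comparison still holds, but this is the one place where a little care is warranted; every remaining manipulation is an algebraic rearrangement to match the stated constants.
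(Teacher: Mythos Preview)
Your proposal is correct and follows essentially the same route as the paper: combine \eqref{eq:condNum} for the first term of \eqref{eq:Bound}, apply Lemma~\ref{lem:tail} to the two tail terms (the sum over $k$ cancels the $1/K$), bound $\sum_{t\geq 1}\log((d+1)t)/t^{\xi+1}$ by $\log(d+1)+\log(d+1)/\xi+1/\xi^2$, and assemble via \eqref{eq:regret}. The only cosmetic difference is in that series bound: the paper keeps $\log((d+1)t)$ intact and uses $f(1)+\int_1^T f$, which is justified because for $d\geq 1$ and $\xi>1$ the whole integrand is already decreasing on $[1,\infty)$ (its maximum is at $t=e^{1/(\xi+1)}/(d+1)<1$), so your monotonicity worry is in fact unnecessary once you refrain from splitting off $\log t$.
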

\begin{proof}
From  (\ref{eq:Bound}),  (\ref{eq:condNum}) and Lemma \ref{lem:tail} we get
\begin{align}
\sum_{k=1}^K\mathbb{E}\left(n_i^k(T)\right) \leq  \frac{K}{1+pd}\eta_i(T)+\frac{2}{\log \zeta}\sum_{t=1}^T\frac{\log \left((d+1)t\right)}{t^{\xi+1}}.\label{eq:Expnumsub}
\end{align}
Note that we have
\begin{align}
\sum_{t=1}^T\frac{\log \left((d+1)t\right)}{t^{\xi+1}}\leq \log (d+1) +\int_1^T\frac{\log (d+1) t}{t^{\xi+1}}dt\nonumber\\
\leq\log (d+1)+\frac{\log (d+1)}{\xi}+\frac{1}{\xi^2}.\label{eq:inttoSUm}
\end{align}
The proof of Theorem \ref{thm:regretBound} follows from (\ref{eq:regret}), (\ref{eq:Expnumsub}) and (\ref{eq:inttoSUm}).
\end{proof}
\begin{remark}\label{rem:dregprob}
When $\Delta_i$ is small and $\sigma_i$ is large, the case where it is difficult to identify the optimal option, the term $\sum_{i=1}^N\frac{K }{1+pd}\frac{8\sigma_i^2}{\Delta_i}\left((\xi+1)\log T+\log K\right)$ is  dominant and the expected cumulative regret of the group decreases as $\frac{1}{1+pd}$ with increasing $p.$ This implies,  for any $d$-regular graph, that expected group cumulative regret  decreases monotonically with increasing $p$; equivalently, group performance  increases monotonically with decreasing edge failure probability $1-p$. Further, this implies that rate of decay of regret with increasing probability $p$ increases monotonically with increasing $d$. That is, the increasing performance with decreasing edge failure probability is accelerated with greater $d$. These observations,  illustrated in Section~\ref{Secn:Simu}, justify our algorithm as well-defined in the sense that group performance improves with reduced failure probability and increased connectivity.
\end{remark}

\subsection{Complete Communication Graph}\label{SubSecn:complete}
When the communication graph is complete $d=K-1.$ Thus, from Theorem~\ref{thm:regretBound}, expected group cumulative regret is 
\begin{align*}
\mathbb{E}\left(R(T)\right) \leq \sum_{i=1}^N\frac{K }{1+p(K-1)}\frac{8\sigma_i^2}{\Delta_i}\left((\xi+1)\log T+\log K\right)\\
+\sum_{i=1}^N\frac{2\Delta_i}{\xi^2\log \zeta}\left(\xi^2\log K+\xi\log K+1\right).
\end{align*}
When $p=1$ this matches the asymptotic regret bound $O(\log (TK))$, provided in papers \cite{martinez2019decentralized} and \cite{landgren2020distributed}.

\subsection{Comparison with Previous Work}\label{SubSecn:comp}
In \cite{madhushani2019heterogeneous} we considered a similar communication structure with degree homogeneity and probabilistic heterogeneity: the underlying communication structure is a $d$-regular graph and agent $k$ observes each of its neighbors with probability $p_k.$ 
If we let $p_k=p,\forall k,$ then that problem reduces to the problem we study in this paper. In \cite{madhushani2019heterogeneous} the asymptotic expected group cumulative regret is
\begin{align*}
\mathbb{E}(R(T))= O\left(K\left(\frac{1}{2}+\frac{1}{2\sqrt{1+p}}\right)\log T+K\log K\right). 
\end{align*}
The present algorithm  provides better theoretical guarantees since, by Theorem~\ref{thm:regretBound}, the same asymptotic  group  regret is
\begin{align*}
\mathbb{E}(R(T))=O\left(\frac{K}{1+pd}\log (TK)\right).
\end{align*}

\section{Simulation Results}\label{Secn:Simu}
In this section, we provide numerical simulation results illustrating  performance of the proposed algorithm and the validity of the theoretical claims. For all the simulations presented in this section, we consider 20 agents $(K=20)$ and 10 options ($N=10$). For all simulations, we provide results with 1000 time steps ($T=1000$) using 1000 Monte Carlo simulations with $\xi=1.01$.

For all simulations provided in Section \ref{subsec:dregsim} and \ref{SubSecn:fail} we let reward distributions be Gaussian. We consider the expected reward of the optimal option to be $\mu_{i^*}=11$ and for all sub-optimal options $i\neq i^*$ to be $\mu_{i}=10$. We let variance associated with all options $i$ be $\sigma_i^2 =1$. Since the expected reward gaps  $\Delta_i=1$, $i\neq i^*$, are equal to the variances $\sigma_i^2 = 1$, it is  challenging to distinguish the optimal option from the sub-optimal options.

\subsection{Validating Remark \ref{rem:dregprob}}\label{subsec:dregsim}
In this section we provide simulation results in the case that observations received by an agent from its neighbors are noisy. We let communication noise be a zero mean Gaussian noise with variance 1. Figure \ref{Fig:d_time} shows the expected group cumulative regret for different values of $d$  when links of the communication graph independently fail with probability 0.5. The orange solid curve corresponds to the circle graph where $d=2$. 
The blue dashed curve corresponds to 
$d=10.$ The green dotted curve corresponds to 
$d=19$, which is the complete graph. These results illustrate that the expected group cumulative regret is logarithmic in time and decreases with increasing $d.$ 

\begin{figure}[!htb]
    \centering
    \includegraphics[width=0.35\textwidth]{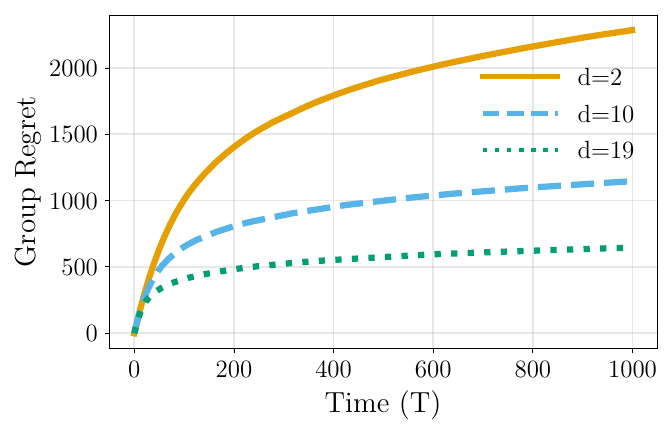}
    \caption{Expected group cumulative regret of 20 agents with $d$-regular underlying communication network and edge failure probability 0.5.}
    \label{Fig:d_time}
    \vspace{-10pt}
\end{figure}

\begin{figure}[!htb]
    \centering
    \includegraphics[width=0.35\textwidth]{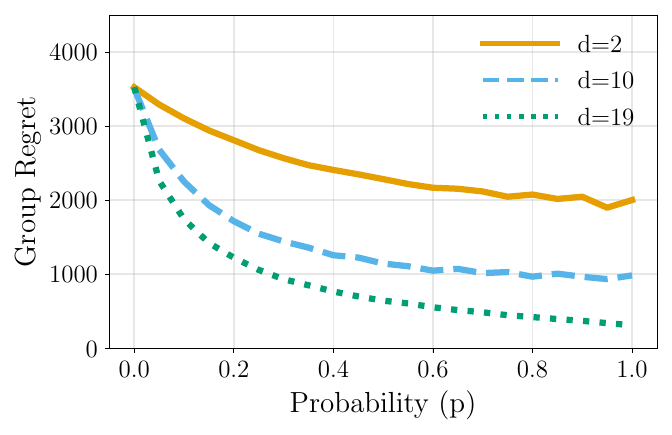}
    \caption{Expected group cumulative regret of 20 agents  with $d$-regular underlying communication network and  edge failure probability $1-p$.}
    \label{Fig:d_prob}
    \vspace{-10pt}
\end{figure}

Figure \ref{Fig:d_prob} shows how group regret changes with edge failure probability for $d=2, d=10$ and $d=19$. The expected group cumulative  regret at the end of the time horizon $T=1000$ is plotted. 
Results show that for each graph expected cumulative group regret  decreases monotonically, on average, with increasing $p$. Further notice that the rate of decay of expected group cumulative regret  increases monotonically with increasing $d.$ The results illustrate that group regret decreases with increasing $p$ as $\frac{1}{1+pd}$ thus validating the statement in Remark \ref{rem:dregprob}.

\subsection{Performance Comparison for Communication with Edge Failures}\label{SubSecn:fail}
In this section we compare  experimental results for our algorithm and the algorithm proposed in  \cite{madhushani2019heterogeneous}. We use for both algorithms the tuning parameter $\xi=1.01$ given in \cite{madhushani2019heterogeneous}. 

\begin{figure}[!htb]
    \centering
    \includegraphics[width=0.35\textwidth]{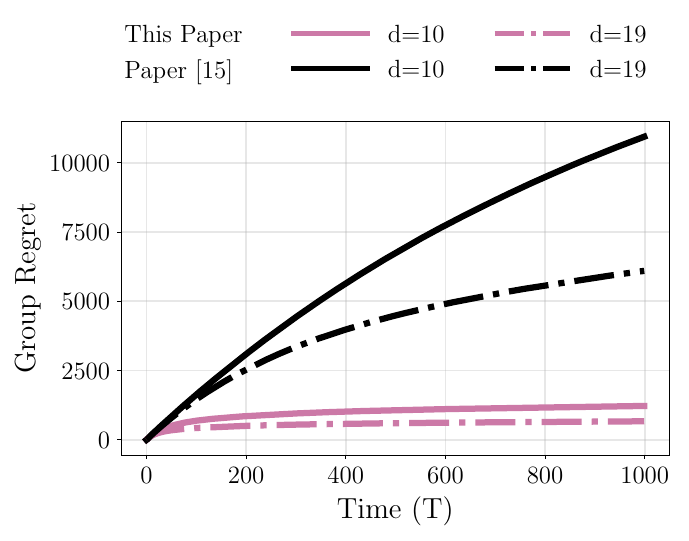}
    \caption{Expected group cumulative regret for a group of 20 agents using the algorithm of this paper and the algorithm of paper \cite{madhushani2019heterogeneous} on $d$-regular underlying communication networks ($d=10$ and $d=19$) with $p = 1-p = 0.5$ edge failure probability. }
    \label{Fig:Compare_ECC}
        \vspace{-10pt}
\end{figure}

Figure \ref{Fig:Compare_ECC} shows the expected group cumulative regret for two $d$-regular graphs,  one with $d=10$ and one with $d=19$, i.e., the complete graph, and the case in which communication links fail with probability $0.5$ ($p=0.5$). The pink solid (dotted) curve corresponds to the case in which agents use the algorithm proposed in this paper when communicating on a $d=10\: (d=19)$-regular graph. The black solid (dotted) curve corresponds to the case in which agents use the algorithm proposed in the paper \cite{madhushani2019heterogeneous} when communicating on a $d=10\:(d=19)$-regular graph.
Simulation results illustrate that the algorithm proposed in this paper performs better than the algorithm proposed in \cite{madhushani2019heterogeneous}. The performance gap between the two algorithms increases with decreasing $d$, i.e., when the number of  neighbors decreases. This illustrates that the algorithm proposed in this paper performs  significantly better  than the algorithm proposed in \cite{madhushani2019heterogeneous} when the communication graph is  sparser. 
\begin{figure}[!htb]
    \centering
     \includegraphics[width=0.35\textwidth]{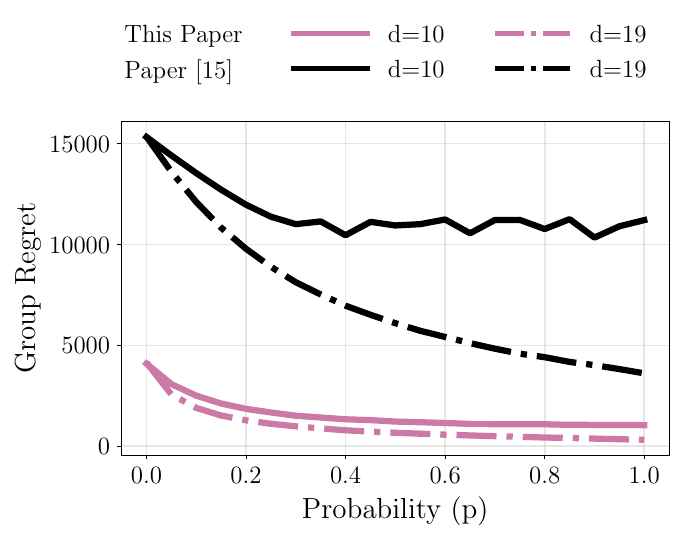}
    \caption{Expected group cumulative regret for a group of 20 agents using the algorithm of this paper and the algorithm of paper \cite{madhushani2019heterogeneous} on $d$-regular underlying communication networks ($d=10$ and $d=19$) with $1-p$ edge failure probability. Results are provided for time $T=1000$.}
    \label{Fig:Compare_ECCp}
        \vspace{-10pt}
\end{figure}

In Figure \ref{Fig:Compare_ECCp} we show that this comparative performance holds for all edge failure probability values. The figure shows how expected group cumulative regret varies with edge failure probability $1-p$. The expected group cumulative regret at the end of the time horizon $T=1000$ is plotted against probability $p$. The results illustrate that the  algorithm proposed in this paper outperforms the algorithm proposed in the paper \cite{madhushani2019heterogeneous} under all edge failure probability values.

\subsection{Performance Comparison for Communication without Edge Failures}\label{SubSecn:NoFail}
In this section we provide a comparison of results when edge failure probability is zero ($p=1$). We compare the performance of  our algorithm with the performance of the UCB-Network algorithm proposed in the paper \cite{kolla2018collaborative}. The UCB-Network algorithm does not contain any tuning parameters. We consider reward distributions to be bounded in the range $[0,1].$ Recall that bounded variables are sub-Gaussian. Let  $X\in [a,b]$ be a bounded random variable with mean $\mu$ and $b>a\geq 0.$ Then $X-\mu$ is sub-Gaussian with variance proxy $\frac{(b-a)^2}{4}.$ Thus bounded reward distributions considered in this section are sub-Gaussian with variance proxy 1/4. We consider the reward distribution of the optimal arm as a triangular distribution with mode 1 and the reward distribution of all the suboptimal arms as a triangular distribution with mode 0.

\begin{figure}[!htb]
    \centering
    \includegraphics[width=0.4\textwidth]{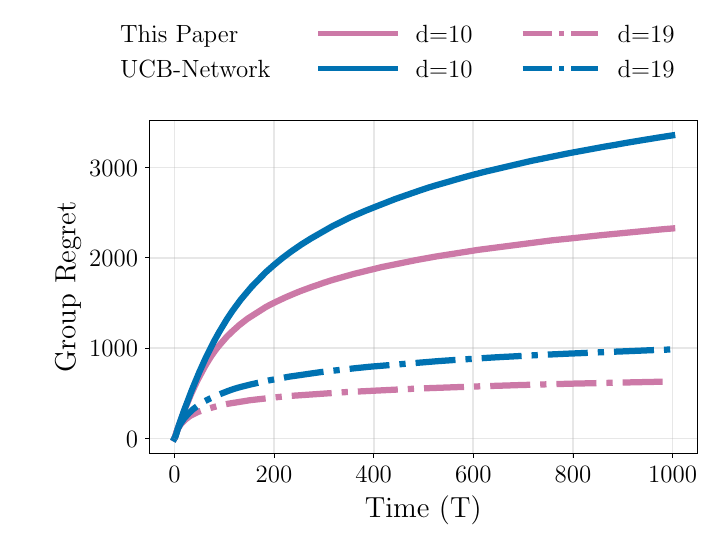}
    \caption{Expected group cumulative regret for a group of 20 agents using the algorithm of this paper and the UCB-Network algorithm  \cite{kolla2018collaborative} on $d$-regular underlying communication networks ($d=10$ and $d=19$) and no edge failures.}
    \label{Fig:Compare_kolla}
    \vspace{-10pt}
\end{figure}

Figure \ref{Fig:Compare_kolla} shows simulation results for expected group cumulative regret for $20$ agents and two $d$-regular graphs: $d=10$ and $d=19$. Results for the agents using the algorithm of this paper when communicating on a $d=10\: (d=19)$-regular graph are given in a pink solid (dotted) curve. The blue solid (dotted) curve shows results agents using the UCB-Network algorithm of \cite{kolla2018collaborative} when communicating on a $d=10\:(d=19)$-regular graph. Simulation results illustrate that our algorithm outperforms UCB-Network. The performance gap between the two algorithms increases with decreasing $d$, i.e., when the communication graph becomes  sparser our algorithm performs increasingly better than UCB-Network.

\section{Discussion}\label{SubSecn:discus}
In this section we discuss the applicability of our framework to social learning. In social learning willingness to share information with neighbors has been explained in evolutionary biology using concepts like reciprocity and intra-specific mutualism \cite{torney2011signalling}. Foraging animals use signalling as a method of communication to track food resources \cite{keasar2002bees,torney2011signalling}.  The paper \cite{keasar2002bees} considered how the bandit framework can be used to model social foraging. In this setting it is reasonable to assume that agents (animal or human) can only communicate their rewards and actions (not estimates). This is because each agent can observe rewards and action of its neighbors or each agent can signal about its rewards and actions to it neighbors. The probability $p$ can be considered as the social effort made by each agent in collective learning. Interpretation of $p$ can be such that at each time step agents observe their neighbors independently with probability $p$ or each agent broadcasts its rewards to its neighbors independently with probability $p.$ Recall that when we consider $p$ as the edge failure probability of the underlying $d$-regular graph $G,$ the resulting communication graphs $\{G_t\}_{t=1}^T$ are undirected. In the setting discussed in this section agent $k$ not observing (broadcasting) reward value and action of (to) agent $j$ at time $t$ does not necessarily mean that agent $j$ is not observing (broadcasting) reward value and action of (to) agent $k,$ and this yields a directed communication graphs $\{G_t\}_{t=1}^T$. All the results provided in this paper hold for directed communication graphs $\{G_t\}_{t=1}^T$ generated by this alternative implementation of $p$. This illustrates that our framework is applicable to a range of real-world applications from machine intelligence to evolutionary biology.

\section{Conclusions}\label{Secn:Concl}
In this paper, we studied the decentralized stochastic bandit problem with probabilistic communication failures. We proposed a new UCB based algorithm to minimize individual expected cumulative regret while contributing to minimizing expected group cumulative regret. We analyzed how agent-based strategies minimize the overall expected regret of the group. We provided improved analytical bounds for the expected group cumulative regret when the underlying communication network is a $d$-regular graph and the communication links independently fail with probability $1-p$. We proved that for any $d$-regular graph expected group cumulative regret monotonically decreases with decreasing $1-p$ (increasing $p$) as $\frac{1}{1+pd}$. We proved that when the communication structure is a complete graph with no edge failures, group regret incurred by the proposed algorithm is logarithmic in time and in number of agents. We illustrated the validity of theoretical results using numerical simulations.
 

\bibliographystyle{IEEEtran}
\bibliography{MAMAB}

\begin{thebibliography}{10}
\providecommand{\url}[1]{#1}
\csname url@samestyle\endcsname
\providecommand{\newblock}{\relax}
\providecommand{\bibinfo}[2]{#2}
\providecommand{\BIBentrySTDinterwordspacing}{\spaceskip=0pt\relax}
\providecommand{\BIBentryALTinterwordstretchfactor}{4}
\providecommand{\BIBentryALTinterwordspacing}{\spaceskip=\fontdimen2\font plus
\BIBentryALTinterwordstretchfactor\fontdimen3\font minus
  \fontdimen4\font\relax}
\providecommand{\BIBforeignlanguage}[2]{{%
\expandafter\ifx\csname l@#1\endcsname\relax
\typeout{** WARNING: IEEEtran.bst: No hyphenation pattern has been}%
\typeout{** loaded for the language `#1'. Using the pattern for}%
\typeout{** the default language instead.}%
\else
\language=\csname l@#1\endcsname
\fi
#2}}
\providecommand{\BIBdecl}{\relax}
\BIBdecl

\bibitem{robbins1952some}
H.~Robbins, ``Some aspects of the sequential design of experiments,''
  \emph{Bulletin of the American Mathematical Society}, vol.~58, no.~5, pp.
  527--535, 1952.

\bibitem{lai1985asymptotically}
T.~L. Lai and H.~Robbins, ``Asymptotically efficient adaptive allocation
  rules,'' \emph{Advances in Applied Mathematics}, vol.~6, no.~1, pp. 4--22,
  1985.

\bibitem{lattimore2020bandit}
T.~Lattimore and C.~Szepesv{\'a}ri, \emph{Bandit Algorithms}.\hskip 1em plus
  0.5em minus 0.4em\relax Cambridge University Press, 2020.

\bibitem{auer2002finite}
P.~Auer, N.~Cesa-Bianchi, and P.~Fischer, ``Finite-time analysis of the
  multiarmed bandit problem,'' \emph{Machine Learning}, vol.~47, no. 2-3, pp.
  235--256, 2002.

\bibitem{bubeck2012regret}
S.~Bubeck and N.~Cesa-Bianchi, ``Regret analysis of stochastic and
  nonstochastic multi-armed bandit problems,'' \emph{arXiv preprint
  arXiv:1204.5721}, 2012.

\bibitem{anantharam1987asymptotically}
V.~Anantharam, P.~Varaiya, and J.~Walrand, ``Asymptotically efficient
  allocation rules for the multiarmed bandit problem with multiple plays-part
  i: Iid rewards,'' \emph{IEEE Transactions on Automatic Control}, vol.~32,
  no.~11, pp. 968--976, 1987.

\bibitem{martinez2019decentralized}
D.~Mart{\'\i}nez-Rubio, V.~Kanade, and P.~Rebeschini, ``Decentralized
  cooperative stochastic bandits,'' in \emph{Advances in Neural Information
  Processing Systems}, 2019, pp. 4531--4542.

\bibitem{landgren2020distributed}
P.~Landgren, V.~Srivastava, and N.~E. Leonard, ``Distributed cooperative
  decision making in multi-agent multi-armed bandits,'' \emph{Automatica}, vol.
  125, p. 109445, 2021, and at arXiv:2003.01312.

\bibitem{kolla2018collaborative}
R.~K. Kolla, K.~Jagannathan, and A.~Gopalan, ``Collaborative learning of
  stochastic bandits over a social network,'' \emph{IEEE/ACM Transactions on
  Networking}, vol.~26, no.~4, pp. 1782--1795, 2018.

\bibitem{landgren2018social}
P.~Landgren, V.~Srivastava, and N.~E. Leonard, ``Social imitation in
  cooperative multiarmed bandits: partition-based algorithms with strictly
  local information,'' in \emph{IEEE Conference on Decision and Control (CDC)},
  2018, pp. 5239--5244.

\bibitem{wang2020optimal}
P.-A. Wang, A.~Proutiere, K.~Ariu, Y.~Jedra, and A.~Russo, ``Optimal algorithms
  for multiplayer multi-armed bandits,'' in \emph{International Conference on
  Artificial Intelligence and Statistics}.\hskip 1em plus 0.5em minus
  0.4em\relax PMLR, 2020, pp. 4120--4129.

\bibitem{chakraborty2017coordinated}
M.~Chakraborty, K.~Y.~P. Chua, S.~Das, and B.~Juba, ``Coordinated versus
  decentralized exploration in multi-agent multi-armed bandits.'' in
  \emph{International Joint Conference on Artificial Intelligence (UJCAI)},
  2017, pp. 164--170.

\bibitem{madhushani2020distributed}
U.~Madhushani and N.~E. Leonard, ``Distributed learning: Sequential decision
  making in resource-constrained environments,'' \emph{arXiv:2004.06171}, 2020.

\bibitem{madhushani2020doesn}
U.~Madhushani and N.~Leonard, ``It doesn’t get better and here’s why: A
  fundamental drawback in natural extensions of ucb to multi-agent bandits,''
  in \emph{''I Can't Believe It's Not Better!''NeurIPS 2020 workshop}, 2020.

\bibitem{madhushani2020dynamic}
U.~Madhushani and N.~E. Leonard, ``A dynamic observation strategy for
  multi-agent multi-armed bandit problem,'' in \emph{19th European Control
  Conference (ECC)}, 2020, pp. 1677--1682.

\bibitem{madhushani2019heterogeneous}
------, ``Heterogeneous stochastic interactions for multiple agents in a
  multi-armed bandit problem,'' in \emph{18th European Control Conference
  (ECC)}, 2019, pp. 3502--3507.

\bibitem{madhushani2020heterogeneous}
------, ``Heterogeneous explore-exploit strategies on multi-star networks,''
  \emph{IEEE Control Systems Letters}, vol.~5, no.~5, pp. 1603--1608, 2021.

\bibitem{warlop2018fighting}
R.~Warlop, A.~Lazaric, and J.~Mary, ``Fighting boredom in recommender systems
  with linear reinforcement learning,'' in \emph{Advances in Neural Information
  Processing Systems}, 2018, pp. 1757--1768.

\bibitem{Durand2018ContextualBF}
A.~Durand, C.~Achilleos, D.~Iacovides, K.~Strati, G.~D. Mitsis, and J.~Pineau,
  ``Contextual bandits for adapting treatment in a mouse model of de novo
  carcinogenesis,'' in \emph{3rd Machine Learning for Healthcare Conference
  (MLHC)}, 2018, pp. 67--82.

\bibitem{feraud2018decentralized}
R.~F{\'e}raud, R.~Alami, and R.~Laroche, ``Decentralized exploration in
  multi-armed bandits,'' \emph{arXiv preprint arXiv:1811.07763}, 2018.

\bibitem{tossou2016algorithms}
A.~C. Tossou and C.~Dimitrakakis, ``Algorithms for differentially private
  multi-armed bandits,'' in \emph{Thirtieth AAAI Conference on Artificial
  Intelligence}, 2016, p. 2087–2093.

\bibitem{torney2011signalling}
C.~J. Torney, A.~Berdahl, and I.~D. Couzin, ``Signalling and the evolution of
  cooperative foraging in dynamic environments,'' \emph{PLoS Computational
  Biology}, vol.~7, no.~9, 2011.

\bibitem{keasar2002bees}
T.~Keasar, E.~Rashkovich, D.~Cohen, and A.~Shmida, ``Bees in two-armed bandit
  situations: foraging choices and possible decision mechanisms,''
  \emph{Behavioral Ecology}, vol.~13, no.~6, pp. 757--765, 2002.

\end{thebibliography}

\newpage
\end{document}